\newtheorem{theorem}{Theorem}
\newcolumntype{L}[1]{>{\raggedright\let\newline\\\arraybackslash\hspace{0pt}}m{#1}}
\newcolumntype{C}[1]{>{\centering\let\newline\\\arraybackslash\hspace{0pt}}m{#1}}
\newcolumntype{R}[1]{>{\raggedleft\let\newline\\\arraybackslash\hspace{0pt}}m{#1}}
\newcommand{\mbs}{\boldsymbol}
\title{PCFGs Can Do Better: Inducing \\ Probabilistic Context-Free Grammars with Many Symbols}
\author{Songlin Yang$^{\clubsuit}$, Yanpeng Zhao$^{\diamondsuit}$, Kewei Tu$^{\clubsuit}$\thanks{\, 	 Corresponding Author}\\
  $^{\clubsuit}$School of Information Science and Technology, ShanghaiTech University \\
  \textsuperscript{}{Shanghai Engineering Research Center of Intelligent Vision and Imaging}\\
  \textsuperscript{}{Shanghai Institute of Microsystem and Information Technology, Chinese Academy of Sciences}\\
    \textsuperscript{}{University of Chinese Academy of Sciences}\\
 $^{\diamondsuit}${ILCC, University of Edinburgh}\\
    {\tt \{yangsl,tukw\}@shanghaitech.edu.cn}\\
    {\tt yannzhao.ed@gmail.com}
 }
\begin{document}
\maketitle
\begin{abstract}
Probabilistic context-free grammars (PCFGs) with neural parameterization have been shown to be effective in unsupervised phrase-structure grammar induction.
However, due to the cubic computational complexity of PCFG representation and parsing,
previous approaches cannot scale up to a relatively large number of (nonterminal and preterminal) symbols.
In this work, we present a new parameterization form of PCFGs based on tensor decomposition,
which has at most quadratic computational complexity in the symbol number and therefore allows us to use a much larger number of symbols.
We further use neural parameterization for the new form to improve unsupervised parsing performance.
We evaluate our model across ten languages and empirically demonstrate the effectiveness of using more symbols. \footnote{Our code: https://github.com/sustcsonglin/TN-PCFG}
\end{abstract}

\section{Introduction}

Unsupervised constituency parsing is the task of inducing phrase-structure grammars from raw text without using parse tree annotations.
Early work induces probabilistic context-free grammars (PCFGs) via the Expectation Maximation algorithm and finds the result unsatisfactory~\citep{lari1990estimation,carroll1992two}.
Recently, PCFGs with neural parameterization (i.e., using neural networks to generate rule probabilities) have been shown to achieve good results in unsupervised constituency parsing \cite{kim-etal-2019-compound,jin-etal-2019-unsupervised,zhu2020return}.
However, due to the cubic computational complexity of PCFG representation and parsing, these approaches learn PCFGs with relatively small numbers of nonterminals and preterminals. For example, \citet{jin-etal-2019-unsupervised} use 30 nonterminals (with no distinction between preterminals and other nonterminals) and \citet{kim-etal-2019-compound} use 30 nonterminals and 60 preterminals.

In this paper, we study PCFG induction with a much larger number of nonterminal and preterminal symbols.
We are partly motivated by the classic work of latent variable grammars in supervised constituency parsing~\citep{matsuzaki-etal-2005-probabilistic,petrov-etal-2006-learning,liang-etal-2007-infinite,cohen-etal-2012-spectral,zhao-etal-2018-gaussian}. While the Penn treebank grammar contains only tens of nonterminals and preterminals, it has been found that dividing them into subtypes could significantly improves the parsing accuracy of the grammar. For example, the best model from \citet{petrov-etal-2006-learning} contains over 1000 nonterminal and preterminal symbols.
We are also motivated by the recent work of \citet{buhai2019empirical} who show that when learning latent variable models, increasing the number of hidden states is often helpful; and by \citet{chiu-rush-2020-scaling} who show that a neural hidden Markov model with up to $2^{16}$ hidden states can achieve surprisingly good performance in language modeling. 

A major challenge in employing a large number of nonterminal and preterminal symbols is that representing and parsing with a PCFG requires a computational complexity that is cubic in its symbol number.
To resolve the issue, we rely on a new parameterization form of PCFGs based on tensor decomposition, which reduces the computational complexity from cubic to at most quadratic.
Furthermore, we apply neural parameterization to the new form, which is crucial for boosting unsupervised parsing performance of PCFGs as shown by \citet{kim-etal-2019-compound}.

We empirically evaluate our approach across ten languages.
On English WSJ, 
our best model with 500 preterminals and 250 nonterminals 
improves over the model with 60 preterminals and 30 nonterminals by 6.3\% mean F1 score, and we also observe consistent decrease in perplexity and overall increase in F1 score with more symbols in our model,
thus confirming the effectiveness of using more symbols.
Our best model also surpasses the strong baseline Compound PCFGs~\citep{kim-etal-2019-compound} by 1.4\% mean F1.
We further conduct multilingual evaluation on nine additional languages.
The evaluation results suggest good generalizability of our approach on languages beyond English.

Our key contributions can be summarized as follows:
(1) We propose a new parameterization form of PCFGs based on tensor decomposition, which enables us to use a large number of symbols in PCFGs.
(2) We further apply neural parameterization to improve unsupervised parsing performance.
(3) We evaluate our model across ten languages and empirically show the effectiveness of our approach.

\section{Related work}\label{sec:related}
\paragraph{Grammar induction using neural networks:} There is a recent resurgence of interest in unsupervised constituency parsing, mostly driven by neural network based methods  \cite{shen2018neural,shen2018ordered,drozdov-etal-2019-unsupervised, drozdov-etal-2020-unsupervised, kim-etal-2019-compound, kim-etal-2019-unsupervised,jin-etal-2019-unsupervised,zhu2020return}. These methods can be categorized into two major groups: those built on top of a generative grammar and those without a grammar component. The approaches most related to ours belong to the first category, which use neural networks to produce grammar rule probabilities. \citet{jin-etal-2019-unsupervised} use an invertible neural projection network (a.k.a. normalizing flow \cite{DBLP:conf/icml/RezendeM15}) to parameterize the preterminal rules of a PCFG. 
\citet{kim-etal-2019-compound} use neural networks to parameterize all the PCFG rules. \citet{zhu2020return} extend their work to lexicalized PCFGs, which are more expressive than PCFGs and can model both dependency and constituency parse trees simultaneously.

In other unsupervised syntactic induction tasks, there is also a trend to use neural networks to produce grammar rule probabilities. In unsupervised dependency parsing, the Dependency Model with Valence (DMV) \cite{klein-manning-2004-corpus} has been parameterized neurally to achieve higher induction accuracy \cite{jiang-etal-2016-unsupervised, songlin2020second}. In part-of-speech (POS) induction, neurally parameterized Hidden Markov Models (HMM) also achieve state-of-the-art results \cite{tran-etal-2016-unsupervised, he-etal-2018-unsupervised}.

\paragraph{Tensor decomposition on PCFGs:} Our work is closely related to \citet{cohen-etal-2013-approximate} in that both use tensor decomposition to parameterize the probabilities of binary rules for the purpose of reducing the time complexity of the inside algorithm. However, \citet{cohen-etal-2013-approximate} use this technique to speed up inference of an existing PCFG, and they need to actually perform tensor decomposition on the rule probability tensor of the PCFG.
In contrast, we draw inspiration from this technique to design a new parameterization form of PCFG that can be directly learned from data. Since we do not have a probability tensor to start with, additional tricks have to be inserted in order to ensure validity of the parameterization, as will be discussed later.




\section{Background}

\subsection{Tensor form of PCFGs}\label{sec:t-pcfg}
PCFGs build upon context-free grammars (CFGs). 
We start by introducing CFGs and establishing notations.
A CFG is defined as a 5-tuple $\mathcal{G} = (\mathcal{S},\mathcal{N}, \mathcal{P},\Sigma, \mathcal{R})$ where $\mathcal{S}$ is the start symbol, $\mathcal{N}$ is a finite set of nonterminal symbols, $\mathcal{P}$ is a finite set of preterminal symbols,\footnote{Strictly, CFGs do not distinguish nonterminals $\mathcal{N}$ (constituent labels) from preterminals $\mathcal{P}$ (part-of-speech tags). 
	They are both treated as nonterminals. 
	$\mathcal{N}, \mathcal{P}, \Sigma$ satisfy $\mathcal{N}\cap\mathcal{P}=\emptyset$ and $(\mathcal{N}\cup\mathcal{P})\cap\Sigma=\emptyset$.} 
$\Sigma$ is a finite set of terminal symbols, and $\mathcal{R}$ is a set of rules in the following form:
\begin{align*}
&S \rightarrow A   & \quad A \in \mathcal{N} \\
&A \rightarrow B C, &\quad A \in \mathcal{N}, \quad B, C \in \mathcal{N} \cup \mathcal{P} \\ &T\rightarrow w, & \quad T \in \mathcal{P}, w \in \Sigma  
\end{align*}
PCFGs extend CFGs by associating each rule $r\in\mathcal{R}$ with a probability $\pi_{r}$.
Denote $n$, $p$, and $q$ 
as the number of symbols in $\mathcal{N}$, $\mathcal{P}$, and $\Sigma$, respectively. 
It is convenient to represent the probabilities of the binary rules in the tensor form:
\begin{align*}
\mathbf{T}_{h_A, h_B, h_C} = \pi_{A\rightarrow BC}\,,\quad\mathbf{T}\in\mathbb{R}^{n\times m\times m}\,,
\end{align*}
where $\mathbf{T}$ is an order-3 tensor, $m = n + p$, and
$h_A\in[0, n)$ and $h_B$, $h_C\in[0, m)$ are symbol indices.
For the convenience of computation, 
we assign indices $[0, n)$ to nonterminals in $\mathcal{N}$ and $[n, m)$ to preterminals in $\mathcal{P}$.
Similarly, for a preterminal rule we define
\begin{align*}
\mathbf{Q}_{h_T, h_w} = \pi_{T\rightarrow w}\,,\quad\mathbf{Q}\in\mathbb{R}^{p\times q}\,.
\end{align*}
Again, $h_T$ and $h_w$ are the preterminal index and the terminal index, respectively. Finally, for a start rule we define
\begin{align*}
\mathbf{r}_{h_A} = \pi_{S\rightarrow A}\,,\quad\mathbf{r}\in\mathbb{R}^{n}\,.
\end{align*}

Generative learning of PCFGs involves maximizing the log-likelihood of every observed sentence $\mbs{w}=w_1,\ldots,w_l$:
\begin{align*}
\log p_{\theta}(\mbs{w}) = \log\sum_{t\in T_{\mathcal{G}}(\mbs{w})}  p(t)\,,
\end{align*}
where $T_{\mathcal{G}}(\mbs{w})$ contains all the parse trees of the sentence $\mbs{w}$ under a PCFG $\mathcal{G}$. The probability of a parse tree $t \in T_{\mathcal{G}}$ is defined as $p(t) = \prod_{r\in t_{\mathcal{R}}} \pi_{r}$, where $t_{\mathcal{R}}$ is the set of rules used in the derivation of t. $\log p_{\theta}(\mbs{w})$ can be estimated efficiently through the inside algorithm, which is fully differentiable and amenable to gradient optimization methods. 


	
	
	
	
		

\subsection{Tensor form of the inside algorithm}\label{sec:t-inside}

We first pad $\mathbf{T}$, $\mathbf{Q}$, and $\mathbf{r}$ with zeros such that $\mathbf{T}\in\mathbb{R}^{m\times m\times m}$,  $\mathbf{Q}\in\mathbb{R}^{m\times q}$, $\mathbf{r}\in\mathbb{R}^{m}$,
and all of them can be indexed by both nonterminals and preterminals.

The inside algorithm computes the probability of a symbol $A$ spanning a substring $\mbs{w}_{i, j} = w_{i},\ldots,w_j$ in a recursive manner ($0\leq i < j < l$):
\begin{align}\label{eq:inside_score}
s_{i, j}^{A} = \sum_{k = i}^{j - 1} \sum_{B, C}\pi_{A\rightarrow BC} \cdot 
s_{i, k}^{B} \cdot s_{k + 1, j}^{C}\,. \\
\text{Base Case:}\quad s_{i, i}^{T} = \pi_{T\rightarrow w_i}\,,  0\leq i < l\,. \nonumber
\end{align}
We use the tensor form of PCFGs to rewrite Equation~\ref{eq:inside_score} as:
\begin{align}
\mathbf{s}_{i, j}^{h_A} &= \sum_{k = i}^{j - 1}  \sum_{h_B, h_C}
\mathbf{T}_{h_A, h_B, h_C} \cdot \mathbf{s}_{i, k}^{h_B} \cdot \mathbf{s}_{k + 1, j}^{h_C}\nonumber \\
&= \sum_{k = i}^{j - 1}\left(\mathbf{T}_{h_A}\cdot \mathbf{s}_{k + 1, j}\right)\cdot \mathbf{s}_{i, k}\,,
\end{align}
where $\mathbf{s}_{i, j}$, $\mathbf{s}_{i, k}$, and $\mathbf{s}_{k+1, j}$ are all $m$-dimensional vectors; 
the dimension $h_A$ corresponds to the symbol $A$. 
Thus
\begin{align}\label{eq:inside_tensor}
\mathbf{s}_{i, j} = \sum_{k = i}^{j - 1}\left(\mathbf{T}\cdot \mathbf{s}_{k + 1, j}\right)\cdot \mathbf{s}_{i, k}\,.
\end{align}

Equation~\ref{eq:inside_tensor} represents the core computation of the inside algorithm as tensor-vector dot product.
It is amenable to be accelerated on a parallel computing device such as GPUs.
However, the time and space complexity is cubic in $m$,
which makes it impractical to use a large number of nonterminals and preterminals.

\section{Parameterizing PCFGs based on tensor decomposition}\label{sec:td-pcfg}

The tensor form of the inside algorithm has a high computational complexity of $\mathcal{O}(m^3l^3)$.
It hinders the algorithm from scaling to a large $m$.
To resolve the issue, we resort to a new parameterization form of PCFGs based on tensor decomposition (TD-PCFGs)~\citep{cohen-etal-2013-approximate}.
As discussed in Section \ref{sec:related}, while \citet{cohen-etal-2013-approximate} use a TD-PCFG to approximate an existing PCFG for speedup in parsing,
we regard a TD-PCFG as a stand-alone model and learn it directly from data.

The basic idea behind TD-PCFGs is using Kruskal decomposition of the order-3 tensor $\mathbf{T}$.
Specifically, we require $\mathbf{T}$ to be in the Kruskal form, 
\begin{align}\label{eq:kruskal}
\mathbf{T} = \sum_{l = 1}^{d} \mathbf{T}^{(l)}\,,\,\,\, 
\mathbf{T}^{(l)} = \mathbf{u}^{(l)}\otimes\mathbf{v}^{(l)}\otimes\mathbf{w}^{(l)}\,,
\end{align}
where $\mathbf{u}^{(l)}\in\mathbb{R}^{n}$ is a column vector of a matrix $\mathbf{U}\in\mathbb{R}^{n\times d}$; $\mathbf{v}^{(l)}$, $\mathbf{w}^{(l)}\in\mathbb{R}^{m}$ are column vectors of matrices $\mathbf{V}$, $\mathbf{W}\in\mathbb{R}^{m\times d}$, respectively;
$\otimes$ indicates Kronecker product.
Thus $\mathbf{T}^{(l)}\in\mathbb{R}^{n\times m\times m}$ is an order-3 tensor 
and 
$$\mathbf{T}^{(l)}_{i, j, k} = \mathbf{u}^{(l)}_{i}\cdot\mathbf{v}^{(l)}_{j}\cdot\mathbf{w}^{(l)}_{k}\,.$$

The Kruskal form of the tensor $\mathbf{T}$ is crucial for reducing the computation of Equation~\ref{eq:inside_tensor}.
To show this, we let $\mathbf{x} = \mathbf{s}_{i, k}$, $\mathbf{y} = \mathbf{s}_{k + 1, j}$, and $\mathbf{z}$ be any summand in the right-hand side of Equation~\ref{eq:inside_tensor}, so we have:
\begin{align}\label{eq:inside_general}
\mathbf{z} = \left(\mathbf{T}\cdot\mathbf{y}\right)\cdot\mathbf{x}\,.
\end{align}
Substitute $\mathbf{T}$ in Equation~\ref{eq:kruskal} into Equation~\ref{eq:inside_general} and consider the $i$-th dimension of $\mathbf{z}$:
\begin{align}
\mathbf{z}_{i} &= (\mathbf{T}_{i}\cdot\mathbf{y})\cdot\mathbf{x} \nonumber \\
&= \sum_{j = 1}^{m}\sum_{k=1}^{m}\sum_{l = 1}^{d}\mathbf{T}_{i, j, k}^{(l)} \cdot\mathbf{x}_{j}\cdot\mathbf{y}_{k} \nonumber \\
&= \sum_{j = 1}^{m}\sum_{k=1}^{m}\sum_{l = 1}^{d} \mathbf{u}^{(l)}_{i}\cdot\mathbf{v}^{(l)}_{j}\cdot\mathbf{w}^{(l)}_{k} \cdot\mathbf{x}_{j}\cdot\mathbf{y}_{k} \nonumber \\
&= \sum_{l = 1}^{d} \mathbf{u}^{(l)}_{i}\cdot
\left(\sum_{j = 1}^{m}\mathbf{v}^{(l)}_{j}\cdot\mathbf{x}_{j}\right)\cdot
\left(\vphantom{\sum_{j = 1}^{m}}\sum_{k=1}^{m}\mathbf{w}^{(l)}_{k} \cdot\mathbf{y}_{k}\right) \nonumber \\
&= \sum_{l = 1}^{d} \mathbf{u}^{(l)}_{i}\cdot
\left(\mathbf{x}^{T}{\mathbf{v}^{(l)}}\right)\cdot
\left(\mathbf{y}^{T}{\mathbf{w}^{(l)}}\right) \nonumber\\
\label{eq:kruskal_hadamard}
&= \left(\mathbf{e}_{i}^{T}\mathbf{U}\right)\cdot
\left(\left(\mathbf{V}^{T}{\mathbf{x}}\right)\odot
\left(\mathbf{W}^{T}{\mathbf{y}}\right)\right)\,, 
\end{align}
where $\odot$ indicates Hadamard (element-wise) product;
$\mathbf{e}_i\in\mathbb{R}^{m}$ is a one-hot vector that selects the $i$-th row of $\mathbf{U}$. 
We have padded $\mathbf{U}$ with zeros such that $\mathbf{U}\in\mathbb{R}^{m\times d}$ and the last $m-n$ rows are all zeros.
Thus
\begin{align}
\mathbf{z} = \mathbf{U}\cdot\left(\left(\mathbf{V}^{T}{\mathbf{x}}\right)\odot
\left(\mathbf{W}^{T}{\mathbf{y}}\right)\right)\,,
\end{align}
and accordingly,
\begin{align}\label{eq:inside_kruskal}
\mathbf{s}_{i, j} = \mathbf{U}\cdot\sum_{k = i}^{j - 1}
\left(\left(\mathbf{V}^{T}{\mathbf{s}_{i, k}}\right)\odot \left(\mathbf{W}^{T}{\mathbf{s}_{k + 1, j}}\right)\right)\,.
\end{align}
Equation~\ref{eq:inside_kruskal} computes the inside probabilities using TD-PCFGs.
It has a time complexity $\mathcal{O}(md)$.
By caching $\mathbf{V}^{T}\mathbf{s}_{i,k}$ and $\mathbf{W}^{T}\mathbf{s}_{k+1, j}$, the time complexity of the inside algorithm becomes $\mathcal{O}(dl^3+mdl^2)$ \cite{cohen-etal-2013-approximate}, which is at most quadratic in $m$ since we typically set $d = \mathcal{O}(m)$. Interestingly, Equation~\ref{eq:inside_kruskal} has similar forms to recursive neural networks \cite{socher-etal-2013-recursive} if we treat inside score vectors as span embeddings.

One problem with TD-PCFGs is that, since we use three matrices $\mathbf{U}, \mathbf{V}$ and $\mathbf{W}$ to represent tensor $\mathbf{T}$ of binary rule probabilities, how we can ensure that $\mathbf{T}$ is non-negative and properly normalized, i.e., for a given left-hand side symbol $A$, $\sum_{j, k}\mathbf{T}_{h_A, j, k} = 1$. 
Simply reconstructing $\mathbf{T}$ with $\mathbf{U}, \mathbf{V}$ and $\mathbf{W}$ and then performing normalization would take $\mathcal{O}(m^3)$ time, thus defeating the purpose of TD-PCFGs.
Our solution is to require that the three matrices are non-negative and meanwhile $\mathbf{U}$ is row-normalized and $\mathbf{V}$ and $\mathbf{W}$ are column-normalized~\citep{shen2018efficient}.

\begin{theorem}\label{prop:kruskal_form}
	Given non-negative matrices $\mathbf{U}\in\mathbb{R}^{n\times d} $ and $\mathbf{V},  \mathbf{W}\in\mathbb{R}^{m\times d}$,
	if $\mathbf{U}$ is row-normalized and $\mathbf{V}$ and $\mathbf{W}$ are column-normalized, then $\mathbf{U}$, $\mathbf{V}$, and $\mathbf{W}$ are a Kruskal decomposition of a tensor $\mathbf{T}\in\mathbb{R}^{n\times m\times m}$ where $\mathbf{T}_{i, j, k}\in[0, 1]$ and $\mathbf{T}_{i}$ is normalized such that $\sum_{j, k}\mathbf{T}_{i, j, k} = 1$.
\end{theorem}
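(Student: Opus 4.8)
The plan is to take the definition of $\mathbf{T}$ to be forced by the Kruskal form of Equation~\ref{eq:kruskal}, namely
\begin{align*}
\mathbf{T}_{i,j,k} = \sum_{l=1}^{d} \mathbf{u}^{(l)}_{i}\cdot\mathbf{v}^{(l)}_{j}\cdot\mathbf{w}^{(l)}_{k} = \sum_{l=1}^{d} \mathbf{U}_{il}\,\mathbf{V}_{jl}\,\mathbf{W}_{kl}\,,
\end{align*}
and then verify the three claimed properties directly. For non-negativity, I would simply observe that each summand is a product of three non-negative reals (by the hypothesis that $\mathbf{U},\mathbf{V},\mathbf{W}$ are non-negative), hence $\mathbf{T}_{i,j,k}\ge 0$.

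The heart of the argument is the normalization identity. Fixing a left-hand side index $i$, I would compute
\begin{align*}
\sum_{j,k}\mathbf{T}_{i,j,k} = \sum_{j,k}\sum_{l=1}^{d}\mathbf{U}_{il}\,\mathbf{V}_{jl}\,\mathbf{W}_{kl} = \sum_{l=1}^{d}\mathbf{U}_{il}\Bigl(\sum_{j}\mathbf{V}_{jl}\Bigr)\Bigl(\sum_{k}\mathbf{W}_{kl}\Bigr)\,,
\end{align*}
where the interchange of finite sums and the factorization are routine. Since $\mathbf{V}$ and $\mathbf{W}$ are column-normalized, each inner sum equals $1$, leaving $\sum_{l}\mathbf{U}_{il}$, which equals $1$ because $\mathbf{U}$ is row-normalized. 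This gives $\sum_{j,k}\mathbf{T}_{i,j,k}=1$. Finally, for the bound $\mathbf{T}_{i,j,k}\in[0,1]$, I would note that non-negativity of all entries together with $\sum_{j',k'}\mathbf{T}_{i,j',k'}=1$ forces every individual entry to satisfy $0\le\mathbf{T}_{i,j,k}\le\sum_{j',k'}\mathbf{T}_{i,j',k'}=1$.

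There is no substantive obstacle here: the proof is a short bookkeeping exercise. The only point requiring mild care is keeping the two normalization conditions straight — using column-normalization of $\mathbf{V},\mathbf{W}$ to collapse the $j$ and $k$ sums and row-normalization of $\mathbf{U}$ to collapse the $l$ sum — and noting that the padding of $\mathbf{U}$ with zero rows (so that $\mathbf{T}$ can be viewed in $\mathbb{R}^{m\times m\times m}$) does not affect any of these identities since it only introduces zero entries.
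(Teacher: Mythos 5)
Your proof is correct and follows essentially the same route as the paper: interchange the finite sums, use column-normalization of $\mathbf{V}$ and $\mathbf{W}$ to collapse the $j$ and $k$ sums, and row-normalization of $\mathbf{U}$ to collapse the $l$ sum. You are in fact slightly more complete than the paper's proof, which only writes out the normalization identity and leaves the non-negativity and the $[0,1]$ bound implicit.
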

\begin{proof}
	\begin{align*}
	\sum_{j=1}^{m} \sum_{k=1}^{m} \mathbf{T}_{i, j, k} 
	&= \sum_{j=1}^{m} \sum_{k=1}^{m}\sum_{l=1}^{d} \mathbf{u}^{(l)}_{i}\cdot\mathbf{v}^{(l)}_{j}\cdot\mathbf{w}^{(l)}_{k} \\
	&= \sum_{l=1}^{d} \mathbf{u}^{(l)}_{i} \cdot (\sum_{j=1}^{m} \mathbf{v}^{(l)}_{j}) \cdot (\sum_{k=1}^{m} \mathbf{w}^{(l)}_{j}) \\
	&= \sum_{l=1}^{d} \mathbf{u}^{(l)}_{i} \cdot 1\cdot 1 = 1
	\end{align*}
\end{proof}

\section{Neural parameterization of TD-PCFGs}\label{sec:tn-pcfg}

We use neural parameterization for TD-PCFGs as it has demonstrated its effectiveness in inducing PCFGs~\citep{kim-etal-2019-compound}.
In a neurally parameterized TD-PCFGs,
the original TD-PCFG parameters are generated by neural networks,
rather than being learned directly;
parameters of the neural network will thus be the parameters to be optimized.
This modeling approach breaks the parameter number limit of the original TD-PCFG, so we can control the total number of parameters flexibly. When the total number of symbols is small, we can over-parameterize the model as over-parameterization has been shown to ease optimization \cite{DBLP:conf/icml/AroraCH18, DBLP:conf/nips/XuHM18, DBLP:conf/iclr/DuZPS19}. On the other hand, when the total number of symbols is huge, we can decrease the number of parameters to save GPU memories and speed up training. 

Specifically, we use neural networks to generate the following set of parameters of a TD-PCFG:
\begin{align*}
\Theta = \{
\mathbf{U}\,,\,\, \mathbf{V}\,,\,\, \mathbf{W}\,,\,\,
\mathbf{Q}\,,\,\, \mathbf{r}
\}\,.
\end{align*}
The resulting model is referred to as neural PCFGs based on tensor decomposition (TN-PCFGs).

We start with the neural parameterization of $\mathbf{U}\in\mathbb{R}^{n\times d}$ and $\mathbf{V}, \mathbf{W}\in\mathbb{R}^{m\times d}$.
We use shared symbol embeddings $\mathbf{E}_s\in\mathbb{R}^{m\times k}$ ($k$ is the symbol embedding dimension) in which each row is the embedding of a nonterminal or preterminal.
We first compute an unnormalized $\tilde{\mathbf{U}}$ by applying a neural network $f_{u}(\cdot)$ to symbol embeddings $\mathbf{E}_s$:
\begin{align*}
\tilde{\mathbf{U}} = f_u(\mathbf{E}_s)= \left(\text{ReLU}\left(\mathbf{E}_s \mathbf{M}_{u}^{(1)}\right)\right) \mathbf{M}_{u}^{(2)}\,,
\end{align*}
where $\mathbf{M}_{u}^{(1)}\in\mathbb{R}^{k \times k}$ and $\mathbf{M}_{u}^{(2)}\in\mathbb{R}^{k \times d}$ are learnable parameters of $f_{u}(\cdot)$. For simplicity, we omit the learnable bias terms.
We compute unnormalized $\tilde{\mathbf{V}}$ and $\tilde{\mathbf{W}}$ in a similar way.
Note that only $\mathbf{E}_s$ is shared in computing the three unnormalized matrices.
Then we apply the Softmax activation function to each row of $\tilde{\mathbf{U}}$ and to each column of $\tilde{\mathbf{V}}$ and $\tilde{\mathbf{W}}$,
and obtain normalized $\mathbf{U}$, $\mathbf{V}$, and $\mathbf{W}$.

For preterminal-rule probabilities $\mathbf{Q}\in\mathbb{R}^{p\times q}$ and start-rule probabilities $\mathbf{r}\in\mathbb{R}^{n}$, 
we follow~\cite{kim-etal-2019-compound} and define them as:
\begin{align*}
\mathbf{Q}_{h_T, h_w} = \pi_{T\rightarrow w} &= \frac{\exp(\mathbf{u}_{w}^{T} f_t(\mathbf{w}_{T}))}
{\sum_{w'\in\Sigma}\exp(\mathbf{u}_{w'}^{T} f_t(\mathbf{w}_{T}))} \,, \\
\mathbf{r}_{h_A} = \pi_{S\rightarrow A} &= \frac{\exp(\mathbf{u}_{A}^{T} f_s(\mathbf{w}_{S}))}
{\sum_{A'\in\mathcal{N}}\exp(\mathbf{u}_{A'}^{T} f_s(\mathbf{w}_{S}))} \,, \\
\end{align*}
where $\mathbf{w}$ and $\mathbf{u}$ are symbol embeddings;
$f_s(\cdot)$ and $f_t(\cdot)$ are neural networks that encode the input into a vector (see details in~\citet{kim-etal-2019-compound}). Note that the symbol embeddings are not shared between preterminal rules and start rules.

\section{Parsing with TD-PCFGs}\label{sec:inference}

Parsing seeks the most probable parse $t^\star$ from all the parses ${T_{\mathcal{G}}(\mbs{w})}$ of a sentence $\mbs{w}$:
\begin{align}
t^\star = \arg\max_{t\in T_{\mathcal{G}}(\mbs{w})} p(t | \mbs{w})\,.
\end{align}
Typically, the CYK algorithm\footnote{
	The CYK algorithm is similar to the inside algorithm. The only difference is that it uses \textsc{Max} whenever the inside algorithm performs \textsc{Sum} over $k$ and $B, C$ (\textit{cf}. Equation~\ref{eq:inside_score}).
} 
can be directly used to solve this problem exactly:
it first computes the score of the most likely parse;
and then automatic differentiation is applied to recover the best tree structure $t^\star$~\citep{eisner-2016-inside,rush-2020-torch}.
This, however, relies on the original probability tensor $\mathbf{T}$ and is incompatible with our decomposed representation.\footnote{
	In Equation~\ref{eq:inside_kruskal} all symbols become entangled through $\mathbf{V}^{T}{\mathbf{s}_{i, k}}$ and $\mathbf{W}^{T}{\mathbf{s}_{k + 1, j}}$.
	We are unable to perform \textsc{Max} over $B, C$ as in the CYK algorithm.
}
If we reconstruct $\mathbf{T}$ from $\mathbf{U}$, $\mathbf{V}$, $\mathbf{W}$ and then perform CYK, then the resulting time and space complexity would degrade to $\mathcal{O}(m^3l^3)$ and become unaffordable when $m$ is large.
Therefore, we resort to Minimum Bayes-Risk (MBR) style decoding because we can compute the inside probabilities efficiently.

Our decoding method consists of two stages.
The first stage computes the conditional probability of a substring $\mbs{w}_{i, j}$ being a constituent in a given sentence $\mbs{w}$ (a.k.a. posteriors of spans being a constituent):
\begin{align*}
p(\mbs{w}_{i, j} | \mbs{w}) = \frac{1}{p(\mbs{w})} \sum_{t\in T_{\mathcal{G}}(\mbs{w})} p(t)\cdot\mathbbm{1}_{\{ \mbs{w}_{i, j} \in t  \}}\,.
\end{align*}
We can estimate the posteriors efficiently by using automatic differentiation after obtaining all the inside probabilities. This has the same time complexity as our improved inside algorithm, which is $\mathcal{O}(dl^3+mdl^{2})$.
The second stage uses the CYK algorithm to find the parse tree that has the highest expected number of constituents \citep{DBLP:conf/acl/SmithE06a}:
\begin{align}
t^\star = \arg\max_{t\in T_{\mathcal{G}}(\mbs{w})} \sum_{\mbs{w}_{i, j} \in t } p(\mbs{w}_{i, j} | \mbs{w})\,.
\end{align}
The time complexity of the second stage is $\mathcal{O}(l^3)$, so the overall time complexity of our decoding method is $\mathcal{O}(dl^3+mdl^2)$, which is much faster than $\mathcal{O}(m^3l^3)$ in general.

\section{Experimental setup}
\subsection{Datasets}
We evaluate TN-PCFGs across ten languages.
We use the Wall Street Journal (WSJ) corpus of the Penn Treebank~\citep{marcus-etal-1994-penn} for English,
the Penn Chinese Treebank 5.1 (CTB)~\citep{xue_xia_chiou_palmer_2005} for Chinese,
and the SPRML dataset~\citep{seddah-etal-2014-introducing} for the other eight morphology-rich languages.
We use a unified data preprocessing pipeline\footnote{ \href{https://github.com/zhaoyanpeng/xcfg}{https://github.com/zhaoyanpeng/xcfg}.} provided by~\citet{zhao2020xcfg}. 
The same pipeline has been used in several recent papers~\citep{shen2018neural,shen2018ordered,kim-etal-2019-compound,zhao-titov-2020-visually}.
Specifically, for every treebank, punctuation is removed from all data splits
and the top 10,000 frequent words in the training data are used as the vocabulary.

\subsection{Settings and hyperparameters}
For baseline models we use the best configurations reported by the authors.
For example, we use 30 nonterminals and 60 preterminals for N-PCFGs and C-PCFGs.
We implement TN-PCFGs and reimplement N-PCFGs and C-PCFGs using automatic differentiation~\citep{eisner-2016-inside} and we borrow the idea of \citet{ijcai2020-560} to batchify the inside algorithm.
Inspired by~\citet{kim-etal-2019-compound},
for TN-PCFGs we set $n / p$, the ratio of the nonterminal number to the preterminal number, to $1 / 2$.
For $\mathbf{U}\in\mathbb{R}^{n\times d}$ and $\mathbf{V}, \mathbf{W}\in\mathbb{R}^{m\times d}$ we set $d=p$ when there are more than 200 preterminals and $d=200$ otherwise. The symbol embedding dimension $k$ is set to 256.
We optimize TN-PCFGs using the Adam optimizer~\citep{kingma2014adam} with $\beta_1 = 0.75$, $\beta_2 = 0.999$, and learning rate 0.001 with batch size 4. We use the unit Gaussian distribution to initialize embedding parameters. We do not use the curriculum learning strategy that is used by \citet{kim-etal-2019-compound} when training TN-PCFGs.

\subsection{Evaluation}
Following~\citet{kim-etal-2019-compound}, we train a TN-PCFG for each treebank separately.
For each setting we run the TN-PCFG and the baselines four times with different random seeds and for ten epochs each time.
Early stopping is performed based on the perplexity on the development data.
The best model in each run is selected according to the perplexity on the development data.
We tune model hyperparameters only on the development data of WSJ and use the same model configurations on the other treebanks.\footnote{
\citet{shi-etal-2020-role} suggest not using the gold parses of the development data for hyperparameter tuning and model selection in unsupervised parsing.
Here we still use the gold parses of the WSJ development set for the English experiments in order to conduct fair comparison with previous work. No gold parse is used in the experiments of any other language.
}
We report average sentence-level F1 score\footnote{Following \citet{kim-etal-2019-compound}, we remove all trivial spans (single-word spans and sentence-level spans). Sentence-level means that we compute F1 for each sentence and then average over all sentences. } as well as their biased standard deviations.

\begin{table}[t!] 	 
	{\setlength{\tabcolsep}{.5em}
		\makebox[\linewidth]{\resizebox{\linewidth}{!}{%
				\begin{tabular}{lll}
					\toprule 
					\multirow{2}{*}{Model} & \multicolumn{2}{c} { WSJ } \\
					\cmidrule{2-3}
					& Mean & Max   \\
					\midrule 
					Left Branching &  & \phantom{0}8.7   \\
					Right Branching &  & 39.5   \\
					Random Trees & 18.1 & 18.2 \\
					\midrule 
					\multicolumn{3}{c}{Systems without pretrained word embeddings}                                   \\
					\midrule 
					PRPN$^{\dagger}$~\citep{shen2018neural} & 47.3 & 47.9 \\
					ON$^{\dagger}$~\citep{shen2018ordered} & 48.1 & 50.0 \\
					N-PCFG~\citep{kim-etal-2019-compound} & 50.8 & 52.6 \\
					C-PCFG~\citep{kim-etal-2019-compound} & 55.2 & 60.1 \\
					NL-PCFG~\cite{zhu2020return} & 55.3 & \\
					\midrule 
					N-PCFG$^\star$ & 50.9$_{\pm 2.3}$ & 54.6 \\
					N-PCFG$^\star$ w/ MBR& 52.3$_{\pm 2.3}$ & 55.8 \\
					C-PCFG$^\star$ & 55.4$_{\pm 2.2}$ & 59.0 \\
					C-PCFG$^\star$ w/ MBR& 56.3$_{\pm 2.1}$ & 60.0\\ 
					TN-PCFG $p=60$ (ours)\phantom{0} & 51.4$_{\pm 4.0}$ & 55.6\\
					TN-PCFG $p=500$ (ours) & \textbf{57.7}$_{\pm 4.2}$ & \textbf{61.4}  \\
					\midrule
				\multicolumn{3}{c}{Systems with pretrained word embeddings}                                   \\
			
					\midrule 
					DIORA~\citep{drozdov-etal-2019-unsupervised}   &  & 56.8   \\
					S-DIORA~\citep{drozdov-etal-2020-unsupervised} & 57.6 & 64.0 \\					
					CT~\citep{cao-etal-2020-unsupervised} & 62.8 & 65.9 \\					 
					\midrule 
					Oracle Trees & & {84.3} \\
					\bottomrule
	\end{tabular}}}}
	\caption{\label{tab:main_wsj_ctb} Unlabeled sentence-level F1 scores on the WSJ test data.  
		$^\dagger$ indicates numbers reported by~\citet{kim-etal-2019-compound}.
		$^\star$ indicates our reimplementations of N-PCFGs and C-PCFGs.
		$p$ denotes the preterminal number. }
		\vskip -.1in
\end{table}

\begin{table}[h!]\small  
	{\setlength{\tabcolsep}{0.4em}
		\makebox[\linewidth]{\resizebox{\linewidth}{!}{%
				\begin{tabular}{lcc}
					\toprule
					\multicolumn{1}{l}{\textbf{Model}} &
					\multicolumn{1}{l}{\textbf{Time (minutes)}} &
					\multicolumn{1}{l}{\textbf{Total Parameters (M)}} \\
					\midrule
					N-PCFG & 19 & 5.3\\
					C-PCFG &  20 & 15.4\\ 
				    TN-PCFG ($p=60$) & 13 & 3.6 \\ 
				    TN-PCFG ($p=500$) & 26 & 4.2\\
					\bottomrule
	\end{tabular}}}}
	\caption{\label{tab:time}Average running time per epoch and the parameter number of each model.}
\end{table}

\section{Experimental results}

We evaluate our models mainly on WSJ (Section~\ref{sec:wsj}-\ref{sec:analysis}).
We first give an overview of model performance in Section~\ref{sec:wsj}
and then conduct ablation study of TN-PCFGs in Section~\ref{sec:ablation}.
We quantitatively and qualitatively analyze constituent labels induced by TN-PCFGs in Section~\ref{sec:analysis}.
In Section~\ref{sec:multilingual}, we conduct a multilingual evaluation over nine additional languages.

\subsection{Main results}\label{sec:wsj}
Our best TN-PCFG model uses 500 preterminals ($p=500$).
We compare it with a wide range of recent unsupervised parsing models (see the top section of Table~\ref{tab:main_wsj_ctb}).
Since we use MBR decoding for TN-PCFGs,
which produces higher F1-measure than the CYK decoding~\citep{goodman-1996-parsing},
for fair comparison we also use MBR decoding for our reimplemented N-PCFGs and C-PCFGs (see the middle section of Table~\ref{tab:main_wsj_ctb}).

We draw three key observations from Table~\ref{tab:main_wsj_ctb}:
(1) TN-PCFG ($p=500$) achieves the best mean and max F1 score. 
Notebly, it outperforms the strong baseline model C-PCFG by 1.4\% mean F1.
Compared with TN-PCFG ($p=60$),
TN-PCFG ($p=500$) brings a 6.3\% mean F1 improvement,
demonstrating the effectiveness of using more symbols.
(2) Our reimplementations of N-PCFGs and C-PCFGs are comparable to those of~\citet{kim-etal-2019-compound}, 
(3) MBR decoding indeed gives higher F1 scores (+1.4\% mean F1 for N-PCFG and +0.9\% mean F1 for C-PCFG).

In Table~\ref{tab:main_wsj_ctb} we also show the results of Constituent test (CT)~\citep{cao-etal-2020-unsupervised} and DIORA~\citep{drozdov-etal-2019-unsupervised,drozdov-etal-2020-unsupervised}, two recent state-of-the-art approaches. However, our work is not directly comparable to these approaches. CT relies on pretrained language models (RoBERTa)  and DIORA relies on pretrained word embeddings (context insensitive ELMo). 
In contrast, our model and the other approaches do not use pretrained word embeddings and instead learn word embeddings from scratch. 
We are also aware of URNNG \citep{kim-etal-2019-unsupervised}, which has a max F1 score of 45.4\%, but it uses punctuation and hence is not directly comparable to the models listed in the table.

We report the average running time\footnote{We measure the running time on a single Titan V GPU.} per epoch and the parameter numbers of different models in Table~\ref{tab:time}. 
We can see that TN-PCFG ($p=500$), which uses a much larger number of symbols, has even fewer parameters and is not significantly slower than N-PCFG.


\subsection{Influence of symbol number}\label{sec:ablation}
\begin{figure}[tb]
	\centering
	\includegraphics[width=1.\linewidth]{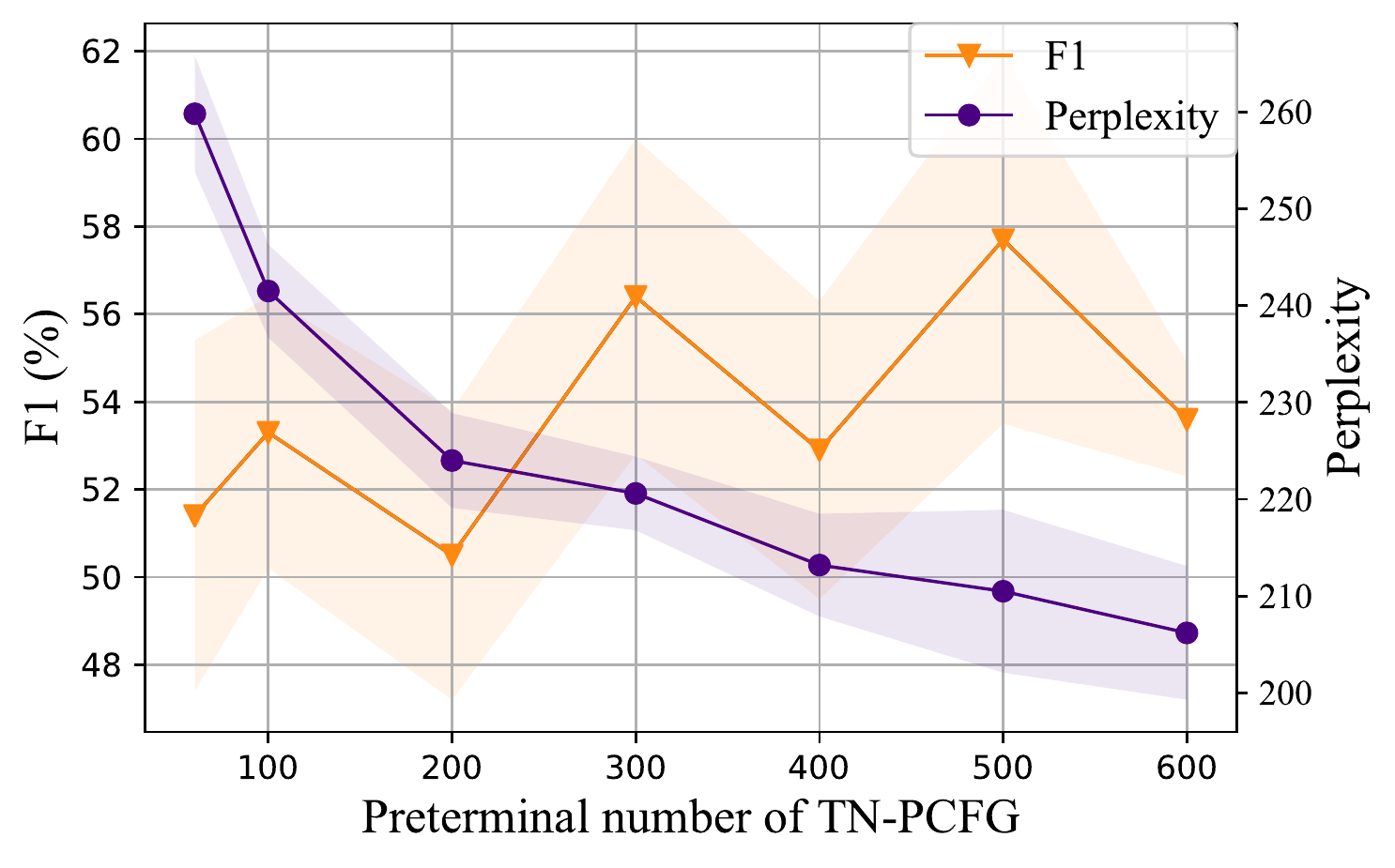}
	\caption{F1 scores and perplexities w.r.t. the preterminal number $p$ of TN-PCFGs on the WSJ test data. Recall that the nonterminal number $n$ is set to half of $p$.}
	\label{fig:f1_vs_perp}
	\vskip -.1in
\end{figure}

Figure~\ref{fig:f1_vs_perp} illustrates the change of F1 scores and perplexities as the number of nonterminals and preterminals increase.
We can see that, as the symbol number increases, the perplexities decrease while F1 scores tend to increase.

\subsection{Analysis on constituent labels}\label{sec:analysis}



We analyze model performance by breaking down recall numbers by constituent labels (Table~\ref{tab:recall_by_label_wsj}).
We use the top six frequent constituent labels in the WSJ test data (NP, VP, PP, SBAR, ADJP, and ADVP).
We first observe that the right-branching baseline remains competitive.
It achieves the highest recall on VPs and SBARs.
TN-PCFG ($p=500$) displays a relatively even performance across the six labels.
Specifically, it performs best on NPs and PPs among all the labels and it beats all the other models on ADJPs.
Compared with TN-PCFG ($p=60$),
TN-PCFG ($p=500$) results in the largest improvement on VPs (+19.5\% recall),
which are usually long (with an average length of 11) in comparison with the other types of constituents.
As NPs and VPs cover about 54\% of the total constituents in the WSJ test data,
it is not surprising that models which are accurate on these labels have high F1 scores (e.g., C-PCFGs and TN-PCFGs ($p=500$)). 
\begin{table*}[ht!]\small
	\centering
	{\setlength{\tabcolsep}{.8em}
		\makebox[\linewidth]{\resizebox{\linewidth}{!}{%
				\begin{tabular}{rllllllll}
					\toprule
					\multicolumn{1}{r}{\textbf{Model}} &
					\multicolumn{1}{l}{\textbf{NP}} & 
					\multicolumn{1}{l}{\textbf{VP}} & 
					\multicolumn{1}{l}{\textbf{PP}} & 
					\multicolumn{1}{l}{\textbf{SBAR}} & 
					\multicolumn{1}{l}{\textbf{ADJP}} & 
					\multicolumn{1}{l}{\textbf{ADVP}} &
					\multicolumn{1}{l}{\textbf{S-F1}}\\
					\midrule
					Left Branching & 10.4 & \phantom{0}0.5 & \phantom{0}5.0 & \phantom{0}5.3 & \phantom{0}2.5 & \phantom{0}8.0 & \phantom{0}8.7 \\
					Right Branching & 24.1 & \textbf{71.5} & 42.4 & \textbf{68.7} & 27.7 & 38.1 & 39.5 \\
					Random Trees & 22.5$_{\pm 0.3}$ & 12.3$_{\pm 0.3}$ & 19.0$_{\pm 0.5}$ & \phantom{0}9.3$_{\pm 0.6}$  & 24.3$_{\pm 1.7}$ & 26.9$_{\pm 1.3}$  &18.1$_{\pm 0.1}$\\
					NPCFG$^{\dagger}$ & 71.2 & 33.8 & 58.8 & 52.5 & 32.5 & 45.5 & 50.8 \\
					C-NPCFG$^{\dagger}$  & 74.7 &  41.7 & 68.8 &  56.1 & 40.4 & 52.5 & 55.2 \\
					\midrule
					N-PCFG$^\star$ w/ MBR & 72.3$_{\pm 3.6}$ &  28.1$_{\phantom{0}\pm6.6}$  & \textbf{73.0}$_{\phantom{0}\pm 2.6}$ & 53.6$_{\pm 10.0}$ & 40.8$_{\pm 4.2}$ & 43.8$_{\phantom{0} \pm 8.3}$ & 52.3$_{\pm 2.3}$ \\
					C-PCFG$^\star$ w/ MBR & 73.6$_{\pm 2.5}$ & 45.0$_{\phantom{0}\pm6.0}$ & 71.4$_{\phantom{0}\pm 1.4}$ & 54.8$_{\phantom{0}\pm 5.6}$ & 44.3$_{\pm 5.9}$ & \textbf{61.6}$_{\pm 17.6}$ & 56.3$_{\pm 2.1}$ \\
					TN-PCFG $p=60$\phantom{0} & \textbf{77.2}$_{\pm 2.6}$ & 28.9$_{\pm 13.8}$& 58.9$_{\pm 17.5}$ & 44.0$_{\pm 17.6}$ &  47.5$_{\pm 5.1}$ &  54.9$_{\phantom{0}\pm 6.0}$ & 51.4$_{\pm 4.0}$ \\
					TN-PCFG $p=500$ & 75.4$_{\pm 4.9}$ & 48.4$_{\pm 10.7}$ & 67.0$_{\pm 11.7}$ & 50.3$_{\pm 13.3}$ & \textbf{53.6}$_{\pm 3.3}$ & 59.5$_{\phantom{0}\pm 2.6}$ & \textbf{57.7}$_{\pm 4.2}$ \\
					\bottomrule
	\end{tabular}}}}
	\vskip -.1in
	\caption{
		Recall on the six frequent constituent labels in the WSJ test data,
		$^\dagger$ denotes results reported by~\citet{kim-etal-2019-compound},
		S-F1 represents sentence-level F1 measure.
	}
	\label{tab:recall_by_label_wsj}
	\vskip -.2in
\end{table*}


We further analyze the correspondence between the nonterminals of trained models and gold constituent labels.
For each model, we look at all the correctly-predicted constituents in the test set and estimate the empirical posterior distribution of nonterminals assigned to a constituent given the gold label of the constituent (see Figure~\ref{fig:align_heatmap}).
Compared with the other three models,
in TN-PCFG ($p=500$), the most frequent nonterminals are more likely to correspond to a single gold label.
One possible explanation is that it contains much more nonterminals and therefore constituents of different labels are less likely to compete for the same nonterminal.

\begin{figure*}[ht!]
	\begin{subfigure}[t]{0.24\linewidth}
		\includegraphics[scale=0.18]{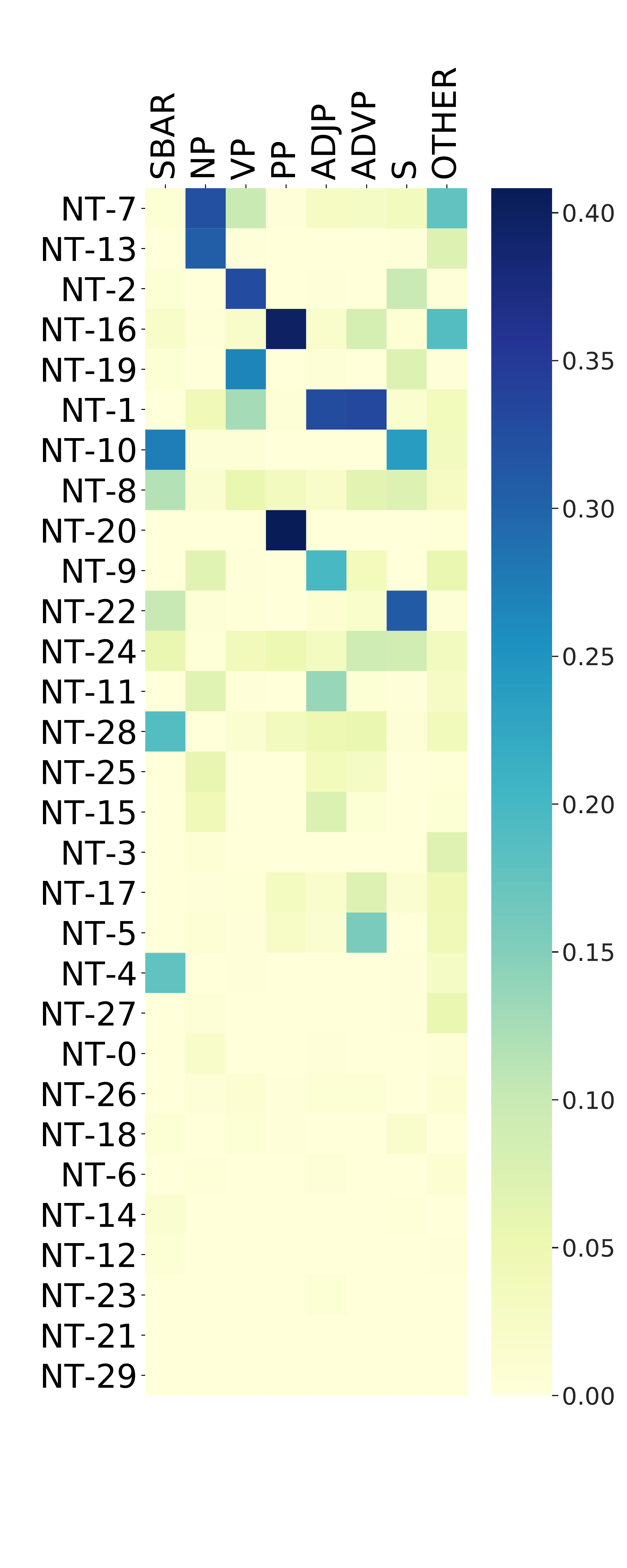}
		\caption{N-PCFG}
		\label{fig:label_induction_n-pcfg}
	\end{subfigure}
	\begin{subfigure}[t]{0.24\linewidth}
		\includegraphics[scale=0.18]{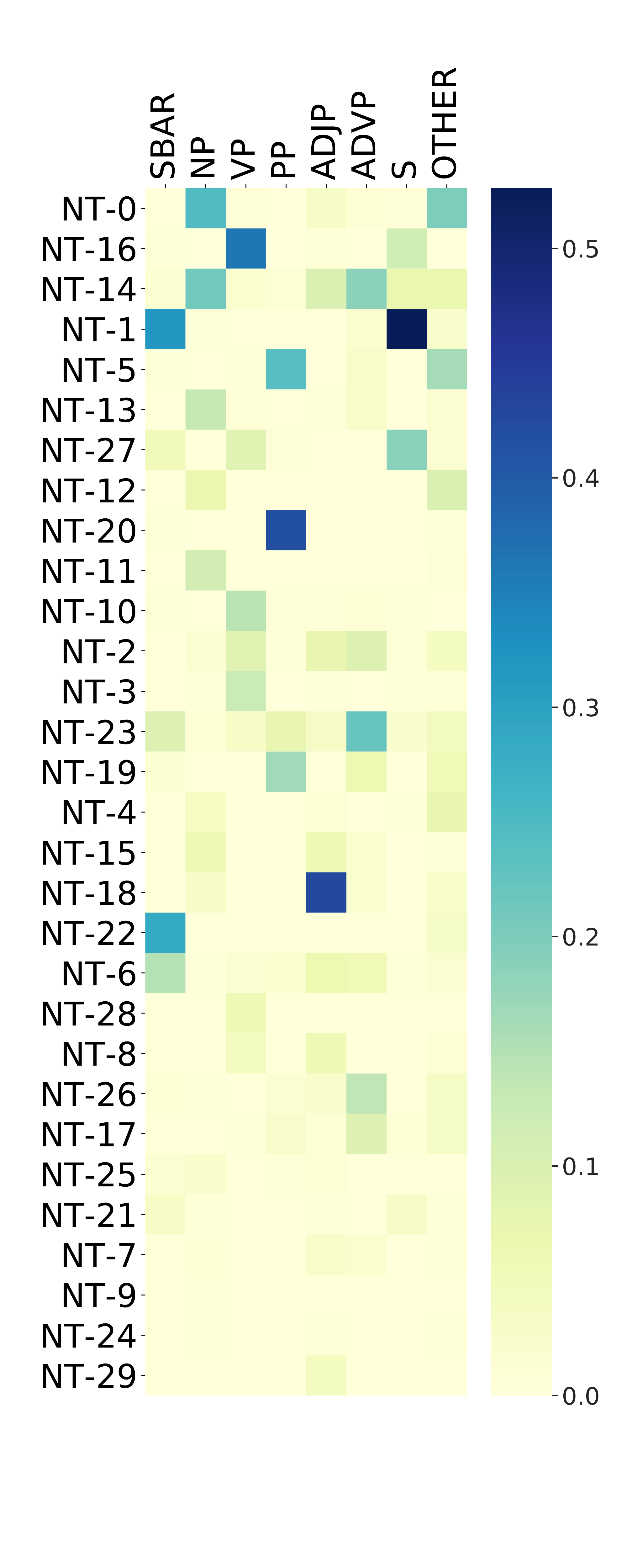}
		\caption{C-PCFG}
		\label{fig:label_induction_c-pcfg}
	\end{subfigure}
	\begin{subfigure}[t]{0.24\linewidth}
		\includegraphics[scale=0.18]{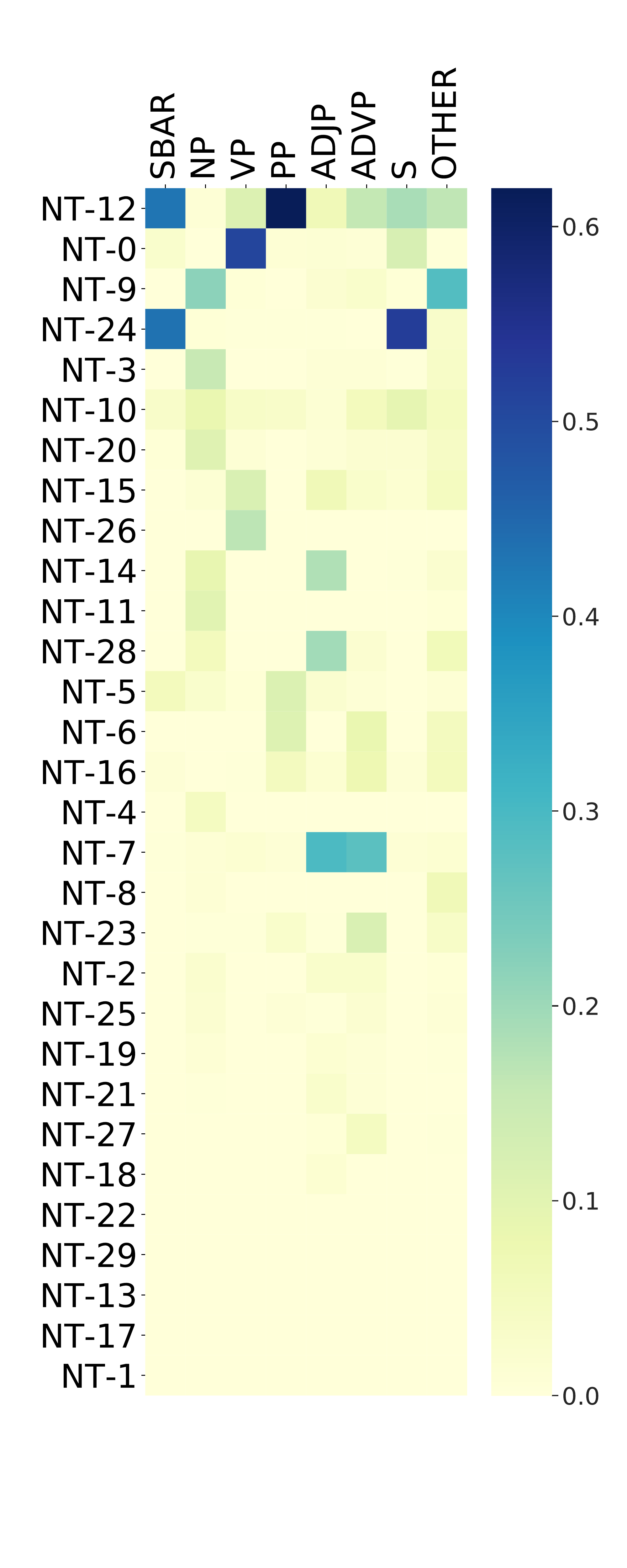}
		\caption{TN-PCFG ($p=60$)}
		\label{fig:label_induction_t60-pcfg}
	\end{subfigure}
	\begin{subfigure}[t]{0.24\linewidth}
		\includegraphics[scale=0.18]{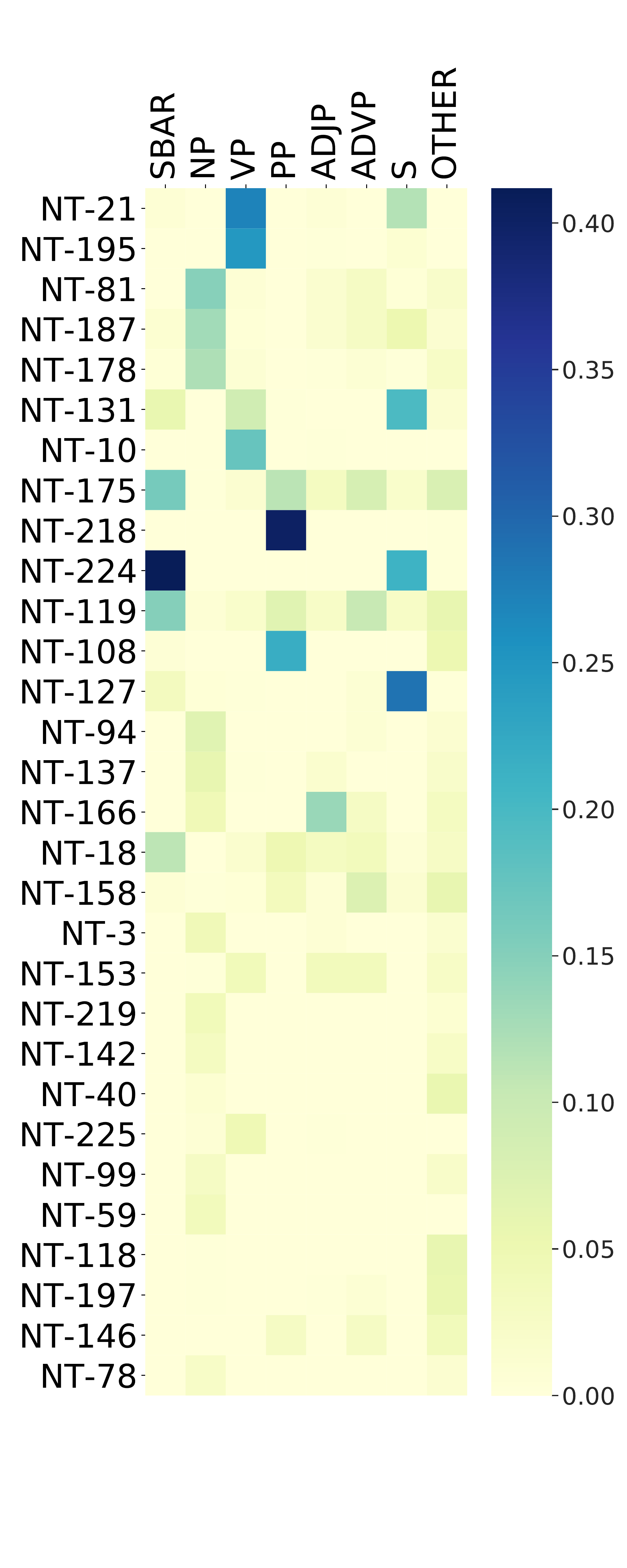}
		\caption{TN-PCFG ($p=500$)}
		\label{fig:label_induction_t500-pcfg}
	\end{subfigure}
	\caption{
	Correspondence between nonterminals and gold constituent labels. For each gold label, we visualize the proportion of correctly-predicted constituents that correspond to each nonterminal.
	Nonterminals NT-\# are listed in descending order of the prediction frequency (e.g., NT-21 is the most predicted nonterminal in TN-PCFG ($p=500$). Only the top 30 frequent nonterminals are listed.
	We show the seven most frequent gold labels and aggregate the rest (denoted by OTHER).
}
	\label{fig:align_heatmap}
	\vskip -.15in
\end{figure*}
Figure~\ref{fig:label_induction_t500-pcfg} (TN-PCFG ($p=500$)) also illustrates that a gold label may correspond to multiple nonterminals. 
A natural question that follows is: \textit{do these nonterminals capture different subtypes of the gold label?}
We find it is indeed the case for some nonterminals.
Take the gold label NPs (noun phrases),
while not all the nonterminals have clear interpretation, 
we find that NT-3 corresponds to constituents which represent a company name;
NT-99 corresponds to constituents which contain a possessive affix (e.g., ``'s'' in ``the market 's decline'');
NT-94 represents constituents preceded by an indefinite article.
We further look into the gold label PPs (preposition phrases).
Interestingly, NT-108, NT-175, and NT-218 roughly divided preposition phrases into three groups starting with `with, by, from, to', `in, on, for', and `of', respectively. See Appendix for more examples.

\subsection{Multilingual evaluation}\label{sec:multilingual}
\begin{table*}[htb!]\small
	\centering
	\vskip -.0in
	{\setlength{\tabcolsep}{.5em}
		\makebox[\linewidth]{\resizebox{\linewidth}{!}{%
				\begin{tabular}{rlllllllll|l}
					\toprule
					\textbf{Model} &
					\textbf{Chinese} &
					\textbf{Basque} &
					\textbf{German} &
					\textbf{French} &
					\textbf{Hebrew} &
					\textbf{Hungarian} &
					\textbf{Korean} &
					\textbf{Polish} &
					\textbf{Swedish} &
					\textbf{Mean} 
					\\ 
					\midrule
					Left Branching$^{\dagger}$ & \phantom{0}7.2 & 17.9 & 10.0 & \phantom{0}5.7 & \phantom{0}8.5 & 13.3 & 18.5 & 10.9 & \phantom{0}8.4 &  11.2 \\
					Right Branching$^{\dagger}$ & 25.5 & 15.4 & 14.7 & 26.4 & 30.0 & 12.7 & 19.2 & 34.2 & 30.4 & 23.2 \\
					Random Trees$^{\dagger}$ & 15.2 & 19.5 & 13.9 & 16.2 & 19.7 & 14.1 & 22.2 & 21.4 & 16.4 & 17.6 \\
					N-PCFG w/ MBR  & 26.3$_{\pm2.5}$  & 35.1$_{\pm 2.0}$ & 42.3$_{\pm 1.6}$ & \textbf{45.0}$_{\pm 2.0}$ & \textbf{45.7}$_{\phantom{0}\pm 2.2}$ & 43.5$_{\pm 1.2}$ & 28.4$_{\pm 6.5}$ & 43.2$_{\pm 0.8}$ & 17.0$_{\phantom{0}\pm 9.9}$ & 36.3 \\
					C-PCFG w/ MBR & 38.7$_{\pm 6.6}$ & \textbf{36.0}$_{\pm 1.2}$ & 43.5$_{\pm 1.2}$  & \textbf{45.0}$_{\pm 1.1}$ & 45.2$_{\phantom{0}\pm 0.5}$ & \textbf{44.9}$_{\pm 1.5}$ & 30.5$_{\pm 4.2}$ & 43.8$_{\pm 1.3}$ & 33.0$_{\pm 15.4}$ & 40.1 \\
					TN-PCFG $p=500$ &\textbf{39.2}$_{\pm5.0}$ & \textbf{36.0}$_{\pm 3.0}$ & \textbf{47.1}$_{\pm 1.7}$ & 39.1$_{\pm 4.1}$ & 39.2$_{\pm 10.7}$ & 43.1$_{\pm 1.1}$ & \textbf{35.4}$_{\pm 2.8}$ & \textbf{48.6}$_{\pm 3.1}$ & \textbf{40.0}$_{\phantom{0}\pm 4.8}$ & \textbf{40.9} \\
					\bottomrule				
	\end{tabular}}}}
	\caption{
		\label{tab:miltilingual} Sentence-level F1 scores on CTB and SPMRL. $^\dagger$ denotes results reported by~\citet{zhao2020xcfg}.
	}
	\vskip -.12in
\end{table*}

In order to understand the generalizability of TD-PCFGs on languages beyond English,
we conduct a multilingual evaluation of TD-PCFGs on CTB and SPMRL.
We use the best model configurations obtained on the English development data and do not perform any further tuning on CTB and SPMRL.
We compare TN-PCFGs with N-PCFGs and C-PCFGs and
use MBR decoding by default. The results are shown in Table~\ref{tab:miltilingual}.
In terms of the average F1 over the nine languages,
all the three models beat trivial left- and right-branching baselines by a large margin,
which suggests they have good generalizability on languages beyond English.
Among the three models, TN-PCFG ($p=500$) fares best.
It achieves the highest F1 score on six out of nine treebanks.
On Swedish, N-PCFG is worse than the right-branching baseline (-13.4\% F1),
while TN-PCFG ($p=500$) surpasses the right-branching baseline by 9.6\% F1.

\section{Discussions}
In our experiments, we do not find it beneficial to use the compound trick \cite{kim-etal-2019-compound} in TN-PCFGs, which is commonly used in previous work of PCFG induction \cite{kim-etal-2019-compound, zhao-titov-2020-visually, zhu2020return}. We speculate that the additional expressiveness brought by compound parameterization may not be necessary for a TN-PCFG with many symbols which is already sufficiently expressive; on the other hand, compound parameterization makes learning harder when we use more symbols. 

We also find neural parameterization and the choice of nonlinear activation functions greatly influence the performance. Without using neural parameterization, TD-PCFGs have only around $30\%$ S-F1 scores on WSJ, which are even worse than the right-branching baseline. Activation functions other than ReLU (such as tanh and sigmoid) result in much worse performance. It is an interesting open question why ReLU and neural parameterization are crucial in PCFG induction.   

When evaluating our model with a large number of symbols, we find that only a small fraction of the symbols are predicted in the parse trees (For example, when we use 250 nonterminals, only tens of nonterminals are used). We expect that our models can benefit from regularization techniques such as state dropout \cite{chiu-rush-2020-scaling}. 

\section{Conclusion}

We have presented TD-PCFGs, a new parameterization form of PCFGs based on tensor decomposition.
TD-PCFGs rely on Kruskal decomposition of the binary-rule probability tensor to reduce the computational complexity of PCFG representation and parsing from cubic to at most quadratic in the symbol number,
which allows us to scale up TD-PCFGs to a much larger number of (nonterminal and preterminal) symbols.
We further propose neurally parameterized TD-PCFGs (TN-PCFGs) and learn neural networks to produce the parameters of TD-PCFGs.
On WSJ test data, 
TN-PCFGs outperform strong baseline models;
we empirically show that using more nonterminal and preterminal symbols contributes to the high unsupervised parsing performance of TN-PCFGs.
Our multiligual evaluation on nine additional languages 
further reveals the capability of TN-PCFGs to generalize to languages beyond English.

\section*{Acknowledgements}
This work was supported by the National Natural Science Foundation of China (61976139).

\bibliography{anthology,custom}

\begin{thebibliography}{41}
\expandafter\ifx\csname natexlab\endcsname\relax\def\natexlab#1{#1}\fi

\bibitem[{Arora et~al.(2018)Arora, Cohen, and Hazan}]{DBLP:conf/icml/AroraCH18}
Sanjeev Arora, Nadav Cohen, and Elad Hazan. 2018.
\newblock \href {http://proceedings.mlr.press/v80/arora18a.html} {On the
  optimization of deep networks: Implicit acceleration by
  overparameterization}.
\newblock In \emph{Proceedings of the 35th International Conference on Machine
  Learning, {ICML} 2018, Stockholmsm{\"{a}}ssan, Stockholm, Sweden, July 10-15,
  2018}, volume~80 of \emph{Proceedings of Machine Learning Research}, pages
  244--253. {PMLR}.

\bibitem[{Buhai et~al.(2019)Buhai, Halpern, Kim, Risteski, and
  Sontag}]{buhai2019empirical}
Rares-Darius Buhai, Yoni Halpern, Yoon Kim, Andrej Risteski, and David Sontag.
  2019.
\newblock Empirical study of the benefits of overparameterization in learning
  latent variable models.
\newblock \emph{arXiv preprint arXiv:1907.00030}.

\bibitem[{Cao et~al.(2020)Cao, Kitaev, and Klein}]{cao-etal-2020-unsupervised}
Steven Cao, Nikita Kitaev, and Dan Klein. 2020.
\newblock \href {https://www.aclweb.org/anthology/2020.emnlp-main.389}
  {Unsupervised parsing via constituency tests}.
\newblock In \emph{Proceedings of the 2020 Conference on Empirical Methods in
  Natural Language Processing (EMNLP)}, pages 4798--4808, Online. Association
  for Computational Linguistics.

\bibitem[{Carroll and Charniak(1992)}]{carroll1992two}
Glenn Carroll and Eugene Charniak. 1992.
\newblock \emph{Two experiments on learning probabilistic dependency grammars
  from corpora}.
\newblock Department of Computer Science, Univ.

\bibitem[{Chiu and Rush(2020)}]{chiu-rush-2020-scaling}
Justin Chiu and Alexander Rush. 2020.
\newblock \href {https://www.aclweb.org/anthology/2020.emnlp-main.103} {Scaling
  hidden {M}arkov language models}.
\newblock In \emph{Proceedings of the 2020 Conference on Empirical Methods in
  Natural Language Processing (EMNLP)}, pages 1341--1349, Online. Association
  for Computational Linguistics.

\bibitem[{Cohen et~al.(2013)Cohen, Satta, and
  Collins}]{cohen-etal-2013-approximate}
Shay~B. Cohen, Giorgio Satta, and Michael Collins. 2013.
\newblock \href {https://www.aclweb.org/anthology/N13-1052} {Approximate {PCFG}
  parsing using tensor decomposition}.
\newblock In \emph{Proceedings of the 2013 Conference of the North {A}merican
  Chapter of the Association for Computational Linguistics: Human Language
  Technologies}, pages 487--496, Atlanta, Georgia. Association for
  Computational Linguistics.

\bibitem[{Cohen et~al.(2012)Cohen, Stratos, Collins, Foster, and
  Ungar}]{cohen-etal-2012-spectral}
Shay~B. Cohen, Karl Stratos, Michael Collins, Dean~P. Foster, and Lyle Ungar.
  2012.
\newblock \href {https://www.aclweb.org/anthology/P12-1024} {Spectral learning
  of latent-variable {PCFG}s}.
\newblock In \emph{Proceedings of the 50th Annual Meeting of the Association
  for Computational Linguistics (Volume 1: Long Papers)}, pages 223--231, Jeju
  Island, Korea. Association for Computational Linguistics.

\bibitem[{Drozdov et~al.(2020)Drozdov, Rongali, Chen, O{'}Gorman, Iyyer, and
  McCallum}]{drozdov-etal-2020-unsupervised}
Andrew Drozdov, Subendhu Rongali, Yi-Pei Chen, Tim O{'}Gorman, Mohit Iyyer, and
  Andrew McCallum. 2020.
\newblock \href {https://www.aclweb.org/anthology/2020.emnlp-main.392}
  {Unsupervised parsing with {S}-{DIORA}: Single tree encoding for deep
  inside-outside recursive autoencoders}.
\newblock In \emph{Proceedings of the 2020 Conference on Empirical Methods in
  Natural Language Processing (EMNLP)}, pages 4832--4845, Online. Association
  for Computational Linguistics.

\bibitem[{Drozdov et~al.(2019)Drozdov, Verga, Yadav, Iyyer, and
  McCallum}]{drozdov-etal-2019-unsupervised}
Andrew Drozdov, Patrick Verga, Mohit Yadav, Mohit Iyyer, and Andrew McCallum.
  2019.
\newblock \href {https://doi.org/10.18653/v1/N19-1116} {Unsupervised latent
  tree induction with deep inside-outside recursive auto-encoders}.
\newblock In \emph{Proceedings of the 2019 Conference of the North {A}merican
  Chapter of the Association for Computational Linguistics: Human Language
  Technologies, Volume 1 (Long and Short Papers)}, pages 1129--1141,
  Minneapolis, Minnesota. Association for Computational Linguistics.

\bibitem[{Du et~al.(2019)Du, Zhai, P{\'{o}}czos, and
  Singh}]{DBLP:conf/iclr/DuZPS19}
Simon~S. Du, Xiyu Zhai, Barnab{\'{a}}s P{\'{o}}czos, and Aarti Singh. 2019.
\newblock \href {https://openreview.net/forum?id=S1eK3i09YQ} {Gradient descent
  provably optimizes over-parameterized neural networks}.
\newblock In \emph{7th International Conference on Learning Representations,
  {ICLR} 2019, New Orleans, LA, USA, May 6-9, 2019}. OpenReview.net.

\bibitem[{Eisner(2016)}]{eisner-2016-inside}
Jason Eisner. 2016.
\newblock \href {https://doi.org/10.18653/v1/W16-5901} {Inside-outside and
  forward-backward algorithms are just backprop (tutorial paper)}.
\newblock In \emph{Proceedings of the Workshop on Structured Prediction for
  {NLP}}, pages 1--17, Austin, TX. Association for Computational Linguistics.

\bibitem[{Goodman(1996)}]{goodman-1996-parsing}
Joshua Goodman. 1996.
\newblock \href {https://doi.org/10.3115/981863.981887} {Parsing algorithms and
  metrics}.
\newblock In \emph{34th Annual Meeting of the Association for Computational
  Linguistics}, pages 177--183, Santa Cruz, California, USA. Association for
  Computational Linguistics.

\bibitem[{He et~al.(2018)He, Neubig, and
  Berg-Kirkpatrick}]{he-etal-2018-unsupervised}
Junxian He, Graham Neubig, and Taylor Berg-Kirkpatrick. 2018.
\newblock \href {https://doi.org/10.18653/v1/D18-1160} {Unsupervised learning
  of syntactic structure with invertible neural projections}.
\newblock In \emph{Proceedings of the 2018 Conference on Empirical Methods in
  Natural Language Processing}, pages 1292--1302, Brussels, Belgium.
  Association for Computational Linguistics.

\bibitem[{Jiang et~al.(2016)Jiang, Han, and Tu}]{jiang-etal-2016-unsupervised}
Yong Jiang, Wenjuan Han, and Kewei Tu. 2016.
\newblock \href {https://doi.org/10.18653/v1/D16-1073} {Unsupervised neural
  dependency parsing}.
\newblock In \emph{Proceedings of the 2016 Conference on Empirical Methods in
  Natural Language Processing}, pages 763--771, Austin, Texas. Association for
  Computational Linguistics.

\bibitem[{Jin et~al.(2019)Jin, Doshi-Velez, Miller, Schwartz, and
  Schuler}]{jin-etal-2019-unsupervised}
Lifeng Jin, Finale Doshi-Velez, Timothy Miller, Lane Schwartz, and William
  Schuler. 2019.
\newblock \href {https://doi.org/10.18653/v1/P19-1234} {Unsupervised learning
  of {PCFG}s with normalizing flow}.
\newblock In \emph{Proceedings of the 57th Annual Meeting of the Association
  for Computational Linguistics}, pages 2442--2452, Florence, Italy.
  Association for Computational Linguistics.

\bibitem[{Kim et~al.(2019{\natexlab{a}})Kim, Dyer, and
  Rush}]{kim-etal-2019-compound}
Yoon Kim, Chris Dyer, and Alexander Rush. 2019{\natexlab{a}}.
\newblock \href {https://doi.org/10.18653/v1/P19-1228} {Compound probabilistic
  context-free grammars for grammar induction}.
\newblock In \emph{Proceedings of the 57th Annual Meeting of the Association
  for Computational Linguistics}, pages 2369--2385, Florence, Italy.
  Association for Computational Linguistics.

\bibitem[{Kim et~al.(2019{\natexlab{b}})Kim, Rush, Yu, Kuncoro, Dyer, and
  Melis}]{kim-etal-2019-unsupervised}
Yoon Kim, Alexander Rush, Lei Yu, Adhiguna Kuncoro, Chris Dyer, and G{\'a}bor
  Melis. 2019{\natexlab{b}}.
\newblock \href {https://doi.org/10.18653/v1/N19-1114} {Unsupervised recurrent
  neural network grammars}.
\newblock In \emph{Proceedings of the 2019 Conference of the North {A}merican
  Chapter of the Association for Computational Linguistics: Human Language
  Technologies, Volume 1 (Long and Short Papers)}, pages 1105--1117,
  Minneapolis, Minnesota. Association for Computational Linguistics.

\bibitem[{Kingma and Ba(2015)}]{kingma2014adam}
Diederik~P Kingma and Jimmy Ba. 2015.
\newblock \href {https://arxiv.org/abs/1412.6980} {Adam: A method for
  stochastic optimization}.
\newblock In \emph{Proceedings of the 3rd International Conference on Learning
  Representations}.

\bibitem[{Klein and Manning(2004)}]{klein-manning-2004-corpus}
Dan Klein and Christopher Manning. 2004.
\newblock \href {https://doi.org/10.3115/1218955.1219016} {Corpus-based
  induction of syntactic structure: Models of dependency and constituency}.
\newblock In \emph{Proceedings of the 42nd Annual Meeting of the Association
  for Computational Linguistics ({ACL}-04)}, pages 478--485, Barcelona, Spain.

\bibitem[{Lari and Young(1990)}]{lari1990estimation}
Karim Lari and Steve~J Young. 1990.
\newblock The estimation of stochastic context-free grammars using the
  inside-outside algorithm.
\newblock \emph{Computer speech \& language}, 4(1):35--56.

\bibitem[{Liang et~al.(2007)Liang, Petrov, Jordan, and
  Klein}]{liang-etal-2007-infinite}
Percy Liang, Slav Petrov, Michael Jordan, and Dan Klein. 2007.
\newblock \href {https://www.aclweb.org/anthology/D07-1072} {The infinite
  {PCFG} using hierarchical {D}irichlet processes}.
\newblock In \emph{Proceedings of the 2007 Joint Conference on Empirical
  Methods in Natural Language Processing and Computational Natural Language
  Learning ({EMNLP}-{C}o{NLL})}, pages 688--697, Prague, Czech Republic.
  Association for Computational Linguistics.

\bibitem[{Marcus et~al.(1994)Marcus, Kim, Marcinkiewicz, MacIntyre, Bies,
  Ferguson, Katz, and Schasberger}]{marcus-etal-1994-penn}
Mitchell Marcus, Grace Kim, Mary~Ann Marcinkiewicz, Robert MacIntyre, Ann Bies,
  Mark Ferguson, Karen Katz, and Britta Schasberger. 1994.
\newblock \href {https://www.aclweb.org/anthology/H94-1020} {The {P}enn
  {T}reebank: Annotating predicate argument structure}.
\newblock In \emph{{H}uman {L}anguage {T}echnology: Proceedings of a Workshop
  held at {P}lainsboro, {N}ew {J}ersey, {M}arch 8-11, 1994}.

\bibitem[{Matsuzaki et~al.(2005)Matsuzaki, Miyao, and
  Tsujii}]{matsuzaki-etal-2005-probabilistic}
Takuya Matsuzaki, Yusuke Miyao, and Jun{'}ichi Tsujii. 2005.
\newblock \href {https://doi.org/10.3115/1219840.1219850} {Probabilistic {CFG}
  with latent annotations}.
\newblock In \emph{Proceedings of the 43rd Annual Meeting of the Association
  for Computational Linguistics ({ACL}{'}05)}, pages 75--82, Ann Arbor,
  Michigan. Association for Computational Linguistics.

\bibitem[{Petrov et~al.(2006)Petrov, Barrett, Thibaux, and
  Klein}]{petrov-etal-2006-learning}
Slav Petrov, Leon Barrett, Romain Thibaux, and Dan Klein. 2006.
\newblock \href {https://doi.org/10.3115/1220175.1220230} {Learning accurate,
  compact, and interpretable tree annotation}.
\newblock In \emph{Proceedings of the 21st International Conference on
  Computational Linguistics and 44th Annual Meeting of the Association for
  Computational Linguistics}, pages 433--440, Sydney, Australia. Association
  for Computational Linguistics.

\bibitem[{Rezende and Mohamed(2015)}]{DBLP:conf/icml/RezendeM15}
Danilo~Jimenez Rezende and Shakir Mohamed. 2015.
\newblock \href {http://proceedings.mlr.press/v37/rezende15.html} {Variational
  inference with normalizing flows}.
\newblock In \emph{Proceedings of the 32nd International Conference on Machine
  Learning, {ICML} 2015, Lille, France, 6-11 July 2015}, volume~37 of
  \emph{{JMLR} Workshop and Conference Proceedings}, pages 1530--1538.
  JMLR.org.

\bibitem[{Rush(2020)}]{rush-2020-torch}
Alexander Rush. 2020.
\newblock \href {https://doi.org/10.18653/v1/2020.acl-demos.38} {Torch-struct:
  Deep structured prediction library}.
\newblock In \emph{Proceedings of the 58th Annual Meeting of the Association
  for Computational Linguistics: System Demonstrations}, pages 335--342,
  Online. Association for Computational Linguistics.

\bibitem[{Seddah et~al.(2014)Seddah, K{\"u}bler, and
  Tsarfaty}]{seddah-etal-2014-introducing}
Djam{\'e} Seddah, Sandra K{\"u}bler, and Reut Tsarfaty. 2014.
\newblock \href {https://www.aclweb.org/anthology/W14-6111} {Introducing the
  {SPMRL} 2014 shared task on parsing morphologically-rich languages}.
\newblock In \emph{Proceedings of the First Joint Workshop on Statistical
  Parsing of Morphologically Rich Languages and Syntactic Analysis of
  Non-Canonical Languages}, pages 103--109, Dublin, Ireland. Dublin City
  University.

\bibitem[{Shen et~al.(2018{\natexlab{a}})Shen, Lin, wei Huang, and
  Courville}]{shen2018neural}
Yikang Shen, Zhouhan Lin, Chin wei Huang, and Aaron Courville.
  2018{\natexlab{a}}.
\newblock \href {https://openreview.net/forum?id=rkgOLb-0W} {Neural language
  modeling by jointly learning syntax and lexicon}.
\newblock In \emph{International Conference on Learning Representations}.

\bibitem[{Shen et~al.(2019)Shen, Tan, Sordoni, and Courville}]{shen2018ordered}
Yikang Shen, Shawn Tan, Alessandro Sordoni, and Aaron Courville. 2019.
\newblock \href {https://openreview.net/forum?id=B1l6qiR5F7} {Ordered neurons:
  Integrating tree structures into recurrent neural networks}.
\newblock In \emph{International Conference on Learning Representations}.

\bibitem[{Shen et~al.(2018{\natexlab{b}})Shen, Zhang, Zhao, Yi, and
  Li}]{shen2018efficient}
Zhuoran Shen, Mingyuan Zhang, Haiyu Zhao, Shuai Yi, and Hongsheng Li.
  2018{\natexlab{b}}.
\newblock \href {https://arxiv.org/abs/1812.01243} {Efficient attention:
  Attention with linear complexities}.
\newblock \emph{arXiv preprint arXiv:1812.01243}.

\bibitem[{Shi et~al.(2020)Shi, Livescu, and Gimpel}]{shi-etal-2020-role}
Haoyue Shi, Karen Livescu, and Kevin Gimpel. 2020.
\newblock \href {https://www.aclweb.org/anthology/2020.emnlp-main.614} {On the
  role of supervision in unsupervised constituency parsing}.
\newblock In \emph{Proceedings of the 2020 Conference on Empirical Methods in
  Natural Language Processing (EMNLP)}, pages 7611--7621, Online. Association
  for Computational Linguistics.

\bibitem[{Smith and Eisner(2006)}]{DBLP:conf/acl/SmithE06a}
David~A. Smith and Jason Eisner. 2006.
\newblock \href {https://www.aclweb.org/anthology/P06-2101/} {Minimum risk
  annealing for training log-linear models}.
\newblock In \emph{{ACL} 2006, 21st International Conference on Computational
  Linguistics and 44th Annual Meeting of the Association for Computational
  Linguistics, Proceedings of the Conference, Sydney, Australia, 17-21 July
  2006}. The Association for Computer Linguistics.

\bibitem[{Socher et~al.(2013)Socher, Perelygin, Wu, Chuang, Manning, Ng, and
  Potts}]{socher-etal-2013-recursive}
Richard Socher, Alex Perelygin, Jean Wu, Jason Chuang, Christopher~D. Manning,
  Andrew Ng, and Christopher Potts. 2013.
\newblock \href {https://www.aclweb.org/anthology/D13-1170} {Recursive deep
  models for semantic compositionality over a sentiment treebank}.
\newblock In \emph{Proceedings of the 2013 Conference on Empirical Methods in
  Natural Language Processing}, pages 1631--1642, Seattle, Washington, USA.
  Association for Computational Linguistics.

\bibitem[{Tran et~al.(2016)Tran, Bisk, Vaswani, Marcu, and
  Knight}]{tran-etal-2016-unsupervised}
Ke~M. Tran, Yonatan Bisk, Ashish Vaswani, Daniel Marcu, and Kevin Knight. 2016.
\newblock \href {https://doi.org/10.18653/v1/W16-5907} {Unsupervised neural
  hidden {M}arkov models}.
\newblock In \emph{Proceedings of the Workshop on Structured Prediction for
  {NLP}}, pages 63--71, Austin, TX. Association for Computational Linguistics.

\bibitem[{Xu et~al.(2018)Xu, Hsu, and Maleki}]{DBLP:conf/nips/XuHM18}
Ji~Xu, Daniel~J. Hsu, and Arian Maleki. 2018.
\newblock \href
  {http://papers.nips.cc/paper/8265-benefits-of-over-parameterization-with-em}
  {Benefits of over-parameterization with {EM}}.
\newblock In \emph{Advances in Neural Information Processing Systems 31: Annual
  Conference on Neural Information Processing Systems 2018, NeurIPS 2018, 3-8
  December 2018, Montr{\'{e}}al, Canada}, pages 10685--10695.

\bibitem[{Xue et~al.(2005)Xue, Xia, Chiou, and
  Palmer}]{xue_xia_chiou_palmer_2005}
Naiwen Xue, Fei Xia, Fu-dong Chiou, and Marta Palmer. 2005.
\newblock \href {https://doi.org/10.1017/S135132490400364X} {The penn chinese
  treebank: Phrase structure annotation of a large corpus}.
\newblock \emph{Natural Language Engineering}, 11(2):207–238.

\bibitem[{Yang et~al.(2020)Yang, Jiang, Han, and Tu}]{songlin2020second}
Songlin Yang, Yong Jiang, Wenjuan Han, and Kewei Tu. 2020.
\newblock Second-order unsupervised neural dependency parsing.
\newblock In \emph{COLING}.

\bibitem[{Zhao(2020)}]{zhao2020xcfg}
Yanpeng Zhao. 2020.
\newblock \href {https://github.com/zhaoyanpeng/cpcfg} {An empirical study of
  compound pcfgs}.
\newblock \emph{https://github.com/zhaoyanpeng/cpcfg}.

\bibitem[{Zhao and Titov(2020)}]{zhao-titov-2020-visually}
Yanpeng Zhao and Ivan Titov. 2020.
\newblock \href {https://www.aclweb.org/anthology/2020.emnlp-main.354}
  {Visually grounded compound {PCFG}s}.
\newblock In \emph{Proceedings of the 2020 Conference on Empirical Methods in
  Natural Language Processing (EMNLP)}, pages 4369--4379, Online. Association
  for Computational Linguistics.

\bibitem[{Zhao et~al.(2018)Zhao, Zhang, and Tu}]{zhao-etal-2018-gaussian}
Yanpeng Zhao, Liwen Zhang, and Kewei Tu. 2018.
\newblock \href {https://doi.org/10.18653/v1/P18-1109} {{G}aussian mixture
  latent vector grammars}.
\newblock In \emph{Proceedings of the 56th Annual Meeting of the Association
  for Computational Linguistics (Volume 1: Long Papers)}, pages 1181--1189,
  Melbourne, Australia. Association for Computational Linguistics.

\bibitem[{Zhu et~al.(2020)Zhu, Bisk, and Neubig}]{zhu2020return}
Hao Zhu, Yonatan Bisk, and Graham Neubig. 2020.
\newblock \href {http://arxiv.org/abs/2007.15135} {The return of lexical
  dependencies: Neural lexicalized pcfgs}.
\newblock \emph{CoRR}, abs/2007.15135.

\end{thebibliography}
\bibliographystyle{acl_natbib}

\appendix
\section{Appendix}

Table \ref{tab:analysis_nt_clusters} shows some representatives nonterminals and the corresponding constituent clusters. 

\begin{table*}[tbp]\small
	\centering
	\vskip -.0in
	\begin{tabular}{L{0.15\linewidth} | C{0.12\linewidth} | L{0.64\linewidth}}
					\toprule
					\textbf{Nonterminal} & \textbf{Description} & \multicolumn{1}{c}{\textbf{Constituents}}
					\\ 
					\midrule
	NT-3 & Company name & Quantum Chemical Corp. /  Dow Chemical Co. / Union Carbide Corp. / First Boston Corp. / 
	Petrolane Inc. / British Petroleum Co. / Ford Motor Co. / Jaguar PLC / Jaguar shares / General Motors Corp. / Northern Trust Co. / Norwest Corp. / Fidelity Investments / Windsor Fund 's / Vanguard Group Inc.
	\\
	\midrule
	NT-99 & Possesive affix & Quantum 's shares
	/ Quantum 's fortunes
	/	year 's end
	/	the company 's sway
	/	September 's steep rise
	/	the stock market 's plunge
	/	the market 's decline
	/	this year 's good inflation news
	/	the nation 's largest fund company
	/	Fidelity 's stock funds
	/	Friday 's close
	/	today 's closing prices
	/	this year 's fat profits \\
	\midrule
	NT-94 & Infinite article & an acquisition of Petrolane Inc.
	/ a transaction / a crisis	/ a bid	/	a minority interest	/	a role	/	a 0.7 \% gain	/	an imminent easing of monetary policy /	a blip /	a steep run-up/	a steeper rise /	a substantial rise	/	a broad-based advance	/	a one-time event /	a buffer /	a group /	an opportunity /	a buying opportunity \\
	\midrule
	NT-166 & Number & NP	99.8 million
	/	34.375 share /85 cents /a year ago /36 \% / 1.18 billion /15 \%
	/ eight members / 0.9 \% /  32.82 billion / 1.9 \%
	/ 1982 = 100 / about 2 billion / 8.8 \% and 9.2 \%  / Ten points / 167.7 million shares
	/ 20,000 shares /   about 2,000 stores / up to 500 a square foot
	/ 17 each / its 15 offering price / 190.58 points \\
	\midrule
	NT-59 / 142 / 219 & Determinater &	The timing	/ nearly a quarter	/	the bottom	/	the troughs	/	the chemical building block	/	only about half	/	the start	/	the day	/	the drawn-out process /	the possibility	/	the actions	/	the extent	/	the start	/	the pace	/	the beginning	/	the prices / less than 15 \% / two years	/	the volume	/	the chaos	/	the last hour \\
	\midrule
	NT-218 & of & of plastics / of the chemical industry / 	of the boom / of 99.8 million or 3.92 a share /	of its value / of Quantum 's competitors / of the heap / 	of its resources / of the plastics market / of polyethylene / its ethylene needs /of a few cents a pound / of intense speculation / 	of retail equity trading / of stock / of Securities Dealers  \\
	\midrule
	NT-108 & from, by, with, to & 	to Mr. Hardiman / to pre-crash levels / through the third quarter / from institutions / through the market / to Contel Cellular / with investors / to the junk bond market / with borrowed money / by the House / from the deficit-reduction bill / by the end of the month / to the streamlined bill / by the Senate action / with the streamlined bill / from the Senate bill / to top paid executives / like idiots \\
	\midrule
	NT-175 & for, on, in & for the ride / on plastics / for plastics producers / on plastics / for this article / at Oppenheimer \& Co. / on Wall Street / 	in a Sept. 29 meeting with analysts / in Morris Ill / in the hospital / between insured and insurer / 	for Quantum / in the U.S. / in the market 's decline /  top of a 0.7 \% gain in August / in the economy / in the producer price index / in the prices of consumer and industrial goods
	\\
	\midrule
	NT-158 & Start with PP & Within the next year / 	To make its point / summer /	Under the latest offer / At this price / In March /	In the same month / Since April / In the third quarter / Right now / As she calculates it / Unlike 1987 course / 	in both instances / in the October 1987 crash / 	In 1984 / Under the agreement /	In addition / 	In July / In computer publishing / On the other hand / In personal computers / At the same time / As a result of this trend / Beginning in mid-1987 / 	In addition \\
	\midrule
	NT-146 & PP in Sentence & 	in August / in September  /	in July / for September /	after Friday 's close /	in the House-passed bill / in the Senate bill / in fiscal 1990 / 	in tax increases / over five years / in air conditioners  / in Styrofoam / 	in federal fees / in fiscal 1990 / in Chicago / 	in Japan /	in Europe in Denver / at Salomon Bros / 	in California / in principle / in August / in operating costs / in Van Nuys / in San Diego / 	in Texas / 	in the dollar / in the past two years / 	in Dutch corporate law / 	since July / 	in 1964 / in June / for instance / in France / in Germany / 	in Palermo / for quotas \\
	\bottomrule		
	\end{tabular}
	\caption{
		\label{tab:analysis_nt_clusters} Nonterminals and the corresponding constituent clusters.
	}
	\vskip -.0in
\end{table*}

\label{sec:appendix}

\end{document}